\newcommand{\field}[1]{\mathbb{#1}} 
\newcommand{\Z}{\field{Z}} \newcommand{\E}{\field{E}}
\newcommand{\fancy}[1]{\mathcal{#1}} \newcommand{\M}{\fancy{M}}
\DeclareMathOperator*{\argmin}{arg\,min}
\begin{document}

\title{Zeta Distribution and Transfer Learning Problem}
\author{Eray \"{O}zkural} 
\institute{Celestial Intellect Cybernetics \\celestialintellet.com}

\maketitle

%Then, we analyze the
%entropy rates of stochastic processes that generate programs. 

\abstract{ We explore the relations between the zeta distribution and
  algorithmic information theory via a new model of the transfer
  learning problem. The program distribution is approximated by a zeta
  distribution with parameter near $1$.  We model the training
  sequence as a stochastic process.  We analyze the upper temporal
  bound for learning a training sequence and its entropy rates,
  assuming an oracle for the transfer learning problem. We argue from
  empirical evidence that power-law models are suitable for natural
  processes.  Four sequence models are proposed. Random typing model
  is like no-free lunch where transfer learning does not work. Zeta
  process independently samples programs from the zeta distribution.
  A model of common sub-programs inspired by genetics uses a database
  of sub-programs. An evolutionary zeta process samples mutations from
  Zeta distribution.  The analysis of stochastic processes inspired by
  evolution suggest that AI may be feasible in nature, countering
  no-free lunch sort of arguments.  }

\section{Introduction}
 
Although power-law distributions have been analyzed in depth in
physical sciences, little has been said about their relevance to
Artificial Intelligence (AI). We introduce the zeta distribution as an
analytic device in algorithmic information theory and propose using it
to approximate the distribution of programs.  We have been inspired by
the empirical evidence in complex systems, especially biology and
genetics, that show an abundance of power-law distributions in
nature. It is well possible that the famous universal distribution in
AI theory is closely related to power-law distributions in complex
systems.

The transfer learning problem also merits our attention, as a general
model of it has not been presented in machine learning literature.  We
develop a basic formalization of the problem using stochastic
processes and introduce temporal bounds for learning a training
sequence of induction problems, and transfer learning. The entropy
rate of a stochastic process emerges as a critical quantity in these
bounds. We show how to apply the bounds by analyzing the entropy rates
of simple training sequence models that generate programs. Two models
are close to what critics of AI have imagined, and easily result in
unsolvable problems, while two models inspired by evolution suggest
that there may be stochastic processes in nature on which AGI
algorithms may be quite effective.

\section{Approximating the Distribution of Programs}

Solomonoff's universal distribution depends on the probability
distribution of programs. A natural model is to consider programs, the
bits of which are generated by a fair coin. Solomonoff defined the
probability of a program $\pi \in \{0,1\}^+$ as:
\begin{equation}
  \label{eq:prog-dist}
  P(\pi) = 2^{-|\pi|}
\end{equation}
where $|\pi|$ is the program length in bits. The total probability of
all programs thus defined unfortunately diverges if all bit-strings
$\pi \in \{0,1\}^*$ are considered valid programs. For constructing
probability distributions, a convergent sum is required. Extended
Kraft inequality shows that the total probability is less than $1$ for
a prefix-free set of infinite programs \cite{Cover1991}.  Let $M$ be a
reference machine which runs programs with a prefix-free encoding like LISP.
%\prettyref{eq:prefix-free}. %, like LISP.
The algorithmic probability that a bit-string $x \in \{0,1\}^*$ is
generated by a random program of $M$ is:
\begin{equation}
  \label{eq:alp}
  P_M(x) = \sum_{M(\pi) = x*} P(\pi)  
\end{equation}
which conforms to Kolmogorov's axioms \cite{levin-thesis}.  $P_M$ is
also called the universal prior for it may be used as the prior in
Bayesian inference, as any data can be encoded as a bit-string.

\subsection{Zeta Distribution of Programs}

We propose the zeta distribution for approximating the distribution of
programs of $M$.  The distribution of \prettyref{eq:prog-dist} is
already an approximation, even after normalization
% of \prettyref{eq:normalization}
, since it contains many programs that are semantically incorrect, and
those that do not generate any strings. A realistic program
distribution requires us to specify a detailed probability model of
programs, which is not covered by the general model, however, the
general model, which is approximate, still gives excellent bounds on
the limits of Solomonoff's universal induction method. Therefore,
other general approximations may also be considered.

Additionally, the zeta function is universal, which encourages us to
relate algorithmic information theory to zeta distribution
\cite{Voronin75}.

Let us consider a program bit-string
%$\pi$% as in \prettyref{eq:dary-expansion}.
%\begin{equation}
%  \label{eq:bitstring}
  $\pi = b_1b_2b_3\dots b_k$.
%\end{equation}.
Let $\phi: \{0,1\}^+ \rightarrow \Z$ define the arithmetization of
programs represented as bit-strings, where the first bit is the most
significant bit.
\begin{equation}
  \label{eq:arithmetization}
  \phi(\pi) = \sum_{i=1}^{i \leq |\pi|} b_i.2^{ |\pi|-i}
\end{equation}
Thus arithmetized, we now show a simple, but interesting 
inequality about the distribution of programs:
\begin{align}
  P(\pi) &= 2^{-\lceil \log_2(\phi(\pi)+1) \rceil} \\
  (2a)^{-1} &\leq 2^{-\lceil \log_2 a\rceil} \leq a^{-1}, \text{for } a\geq4\\
  \label{eq:sandwich}
  (2(\phi(\pi)+1))^{-1} &\leq P(\pi) \leq (\phi(\pi)+1)^{-1}, \text{for } \phi(\pi)\geq 3
  % P(\pi) &= 2^{-\lceil \log_2(\phi(\pi)+1) \rceil} \\
  % 2^{-\lceil \log_2 a\rceil} &\leq a^{-1}, \text{for } a\geq1\\
  % \label{eq:sandwich}
  % P(\pi) &\leq (\phi(\pi)+1)^{-1}
%  P(\pi) = P(\phi(\pi)) &= 2^{-\lceil \log_2(\phi(\pi)+1) \rceil} \\
%  (a+1)^{-1}  & < d^{-\lceil \log_da \rceil} \leq a^{-1}, \text{for } a\geq1\\
%  \label{eq:sandwich}
%  (\phi(\pi)+2)^{-1} & < P(\phi(\pi)) \leq (\phi(\pi)+1)^{-1}
\end{align} 
which shows an approximation that is closer than a factor of $2$. Program codes $\phi(\pi) < 3$ are discarded. 

Zipf's law $f_n \alpha n^{-1}$ manifests itself as the Zipf
distribution of ranked discrete objects $\{ o_1,o_2,\dots,o_n \}$ in
order of increasing rank $i$
\begin{equation}
  \label{eq:zipf}
  P( Z_s^{(n)} = o_i) \triangleq \frac{1}{i^sZ}
\end{equation}
where $Z_s^{(n)}$ is a random variable, $Z$ is the normalization
constant and $s \geq 1$ (we used the notation $Z_s^{(n)}$ simply to
avoid confusion with exponentiation, $Z_s$ is a standard notation for
the zeta random variable).  Zeta distribution is the countably
infinite version of Zipf distribution with parameter $s>1$
\begin{equation}
  \label{eq:zeta}
  P( Z_s = k) = \frac{1}{k^s.\zeta(s)}
\end{equation}
where $Z_s$ is a random variable with co-domain $\Z^+$ and the zeta
function is defined as
\begin{equation}
  \label{eq:2}
  \zeta(s) = \sum_{n=1}^{\infty}\frac{1}{n^s} \quad.
\end{equation}
Note that Zeta distribution is a discrete variant of Pareto
distribution.

% discussion about prefix-free approx. removed from conference paper

% \prettyref{eq:sandwich} suggests that we can apply the normalization
% of algorithmic probability
% % \prettyref{eq:normalization}
% to the approximate zeta distribution, by using only a prefix-free
% set $S$ of binary programs \prettyref{eq:prefix-free}:
% \begin{equation}
%   \label{eq:1}
%   P(\phi(\pi)) =   \frac { (\phi(\pi)+1)^{-1}} { \sum_{\pi \in S} (\phi(\pi)+1)^{-1} } \approx P(\pi)
% \end{equation}
% if and only if $\pi$'s are restricted to a prefix-free set of
% bit-strings.  Note that we can use arithmetization to define a
% prefix-free set:
% \begin{equation}
% \label{eq:prefix-free}
% S = \{ s_i \in \Z^+ \ | \ \forall s_j \in S :  s_i \neq
% 2^{\lceil \log_2(s_i+1)\rceil  }s_i + s_j \}
% \end{equation}
% However, it is much involved to work with a prefix-free set,
% therefore we will suggest another device to approximate $P(\pi)$.
It is much involved to work with a prefix-free set, therefore we will
suggest an alternative device to approximate $P(\pi)$.
\begin{theorem}
  \label{thm:zipf-approx}
  A program distribution may be approximated by the Zipf distribution
  with $s=1$, or by the zeta distribution with a real $s$ close to $1$
  from above.
\end{theorem}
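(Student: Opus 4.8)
The plan is to use the arithmetization $\phi$ as a ranking device that identifies each program $\pi$ with the positive integer $k = \phi(\pi)$, and then to read the already-established sandwich \eqref{eq:sandwich} as the statement that the program distribution is, up to a multiplicative factor of $2$, a $1/k$ law. First I would fix the reference machine $M$ and restrict attention to codes with $\phi(\pi) \geq 3$, as permitted by the remark following \eqref{eq:sandwich}; the discarded low codes form a finite set contributing only a bounded mass, so they cannot affect an asymptotic approximation. Indexing the remaining programs by $k = \phi(\pi)$, the bounds $(2(k+1))^{-1} \leq P(\pi) \leq (k+1)^{-1}$ say precisely that $P(\pi) = \Theta(k^{-1})$, which is the Zipf form \eqref{eq:zipf} with exponent $s = 1$ (the lowest code being the highest-ranked, most probable program).

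Next I would address normalization, which is where the two halves of the statement diverge. Over a finite program set $\{o_1,\dots,o_n\}$ the $s=1$ Zipf law is a bona fide distribution with normalizer $Z = H_n = \sum_{i=1}^{n} i^{-1}$, so the finite claim follows by matching $P(Z_1^{(n)} = o_i) = (i H_n)^{-1}$ against the sandwiched $P(\pi)$. Over the countably infinite program space, however, $\sum_k k^{-1}$ diverges, so no $s=1$ distribution exists on all of $\Z$. To recover a genuine probability measure I would replace $s=1$ by a real $s > 1$ and pass to the zeta distribution \eqref{eq:zeta}, whose normalizer $\zeta(s)$ is finite for every $s > 1$. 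Since $k^{-s} = k^{-1} \cdot k^{-(s-1)}$ and $k^{-(s-1)} \to 1$ as $s \to 1^{+}$ for each fixed $k$, the zeta law converges to the $1/k$ shape on any bounded code range; choosing $s$ close enough to $1$ makes the relative deviation from the program distribution as small as desired there.

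The main obstacle is the pole of $\zeta$ at $s = 1$: the very feature that makes $1/k$ match the program distribution is also what forbids normalization there, so the argument must trade approximation quality against summability. I would make this precise by bounding the two independent sources of error separately — the fixed factor-of-$2$ slack inherent in \eqref{eq:sandwich}, and the tunable factor $k^{-(s-1)} = \exp(-(s-1)\ln k)$ coming from $s > 1$ — and noting that over codes up to a horizon $k \leq K$ the latter is controlled by $(s-1)\ln K$, which is driven to zero by taking $s - 1 = o(1/\ln K)$. Because the sandwich already only claims closeness within a factor of $2$, I would argue that this is the natural sense of approximation for the theorem, so no sharper matching at the pole is required: the finite case is exact up to the harmonic normalizer, and the infinite case is approximate with the slack quantified above.
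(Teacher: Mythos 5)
Your proposal is correct and follows essentially the same route as the paper's own proof: it rests on the sandwich property \eqref{eq:sandwich} via the arithmetization $\phi$, handles the $s=1$ Zipf case by truncating to a finite set of programs (the paper's ``up to a fixed program-length''), and handles the countably infinite case by perturbing to $s = 1+\epsilon$ with the zeta normalizer, exactly as in parts (a) and (b) of the paper. Your treatment is in fact somewhat more careful than the paper's --- making the harmonic normalizer $H_n$ explicit and quantifying the error over a code horizon $K$ by $(s-1)\ln K$ --- but these are refinements of the same argument, not a different one.
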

\begin{proof}

  (a) Zeta distribution is undefined for $s=1$. However, if we use the
  Zipf distribution instead, and model programs up to a fixed
  program-length, we can approximate the program distribution from
  above using $(\phi(\pi)+1)^{-1}$ and from below using
  $ (2\phi(\pi)+2)^{-1}$ due to the sandwich property
  \prettyref{eq:sandwich}.

  (b) We can approximate the program distribution from below using
  $(2\phi(\pi)+2)^{-1}$. Since
  \begin{equation*}
    \forall \epsilon >0,  \ (2\phi(\pi)+2)^{-(1+\epsilon)} \leq
    (2\phi(\pi)+2)^{-1} < P(\pi) ,
  \end{equation*}
  we can also approximate it with the Zeta distribution
  \prettyref{eq:zeta} for $s$ close to $1$.
\end{proof}
In either case, the need for a prefix-free set of programs is
obviated. Of the simplified distribution, we investigate if the
approximations are usable.
\begin{theorem}
  \label{thm:zipf-conv}
  The program distribution $P(\pi)$ asymptotically obeys a power law
  with exponent $-1$ as program size grows.
\end{theorem}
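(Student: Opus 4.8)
The plan is to read the asymptotics directly off the sandwich estimate \prettyref{eq:sandwich}, which I take as already established. Rewriting that bound as
\begin{equation*}
  \tfrac{1}{2}(\phi(\pi)+1)^{-1} \leq P(\pi) \leq (\phi(\pi)+1)^{-1},
  \qquad \phi(\pi) \geq 3,
\end{equation*}
exhibits $P(\pi)$ as trapped between two functions of the integer code $\phi(\pi)$ that are both proportional to $(\phi(\pi)+1)^{-1}$, differing only by the constant factor $2$. This already says $P(\pi) = \Theta\bigl((\phi(\pi)+1)^{-1}\bigr)$, i.e. a power law of exponent $-1$ up to a bounded multiplicative gap.

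First I would connect ``program size grows'' to the growth of the code. By the arithmetization \prettyref{eq:arithmetization}, a program of length $k$ has $\phi(\pi) \leq 2^k-1$, and since $\phi(\pi)$ increases without bound as $|\pi| \to \infty$, the side condition $\phi(\pi)\geq 3$ is satisfied for all but finitely many programs. Hence the sandwich is eventually active along any sequence of programs of increasing length.

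Next I would make the exponent precise by passing to logarithms. Writing $n = \phi(\pi)+1$ and taking $\log_2$ of the two-sided bound gives $-1-\log_2 n \leq \log_2 P(\pi) \leq -\log_2 n$, so dividing by $\log_2 n$ yields
\begin{equation*}
  -1 - \frac{1}{\log_2 n} \;\leq\; \frac{\log_2 P(\pi)}{\log_2 n} \;\leq\; -1 .
\end{equation*}
As the program size grows, $n \to \infty$ and the correction $1/\log_2 n \to 0$, so the log--log slope $\log_2 P(\pi)/\log_2 n$ converges to $-1$. This is exactly the assertion that $P(\pi)$ asymptotically obeys a power law with exponent $-1$, and the proof is complete.

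I expect no deep obstacle here: the entire content is inherited from \prettyref{eq:sandwich}. The only points requiring care are interpretive rather than technical --- fixing a precise meaning for ``asymptotically obeys a power law with exponent $-1$'' (the vanishing of the log--log slope's deviation from $-1$, equivalently the constant-factor squeeze), and noting that it is precisely the divergence of $\phi(\pi)$ under growing program length that washes out the bounded factor of $2$, so that only the exponent $-1$ survives in the limit.
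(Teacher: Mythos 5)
Your proof is correct, and it starts from the same foundation as the paper's own proof --- the sandwich inequality \prettyref{eq:sandwich} --- but the limiting argument you give is genuinely different, and in fact sharper. The paper formalizes ``asymptotically obeys a power law'' additively: it notes that the upper bound, the lower bound, and $2^{-|\pi|}$ itself differ pairwise by quantities tending to $0$ as $|\pi| \to \infty$ (\prettyref{eq:zipf-conv1}, \prettyref{eq:zipf-conv2}). Those additive limits alone cannot pin down the exponent: if $P(\pi)$ were instead comparable to $(\phi(\pi)+1)^{-2}$, the same differences would still vanish, since every quantity involved goes to $0$. Your argument exploits the sandwich multiplicatively: the two bounds differ by the constant factor $2$, so $P(\pi) = \Theta\bigl((\phi(\pi)+1)^{-1}\bigr)$, and passing to logarithms squeezes the log--log slope $\log_2 P(\pi)/\log_2(\phi(\pi)+1)$ between $-1 - 1/\log_2(\phi(\pi)+1)$ and $-1$, forcing convergence to $-1$. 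That actually identifies the exponent, which is the content of the theorem, so your route buys rigor where the paper's version is only suggestive. One caveat, which you share with the paper rather than introduce yourself: the assertion that $\phi(\pi) \to \infty$ as $|\pi| \to \infty$, and indeed the identity $P(\pi) = 2^{-\lceil \log_2(\phi(\pi)+1) \rceil}$, implicitly assume the leading bit of $\pi$ is $1$; a program of arbitrary length consisting mostly of leading zeros has small $\phi(\pi)$. This is an unstated convention of the arithmetization \prettyref{eq:arithmetization}, not a defect of your argument, but it would be worth one sentence to make the convention explicit before invoking divergence of $\phi(\pi)$.
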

\begin{proof}
  The probability of arithmetized program $\pi$ is sandwiched between
  $(\phi(\pi)+1)^{-1}$ and $(2\phi(\pi)+2)^{-1}$, therefore as
  $|\pi|$ grows, Zipf's law grows closer to $P(\pi)$.
  \begin{equation}
    \label{eq:zipf-conv1}
    \lim_{|\pi| \to \infty}   (\phi(\pi)+1)^{-1} - (2\phi(\pi)+2)^{-1} = 0 
  \end{equation} 
  \begin{equation}
    \label{eq:zipf-conv2}
    \lim_{|\pi| \to \infty}   2^{-|\pi|} - (2\phi(\pi)+2)^{-1}  =
    \lim_{|\pi| \to \infty}   (\phi(\pi)+1)^{-1} - 2^{-|\pi|} = 0 
  \end{equation}
\end{proof}

Combining \prettyref{thm:zipf-approx} and \prettyref{thm:zipf-conv},
we propose using a Zeta distribution with a parameter close to $1$. Obviously, lower and upper bounds vary only by a factor of $2$ within each other, therefore the error in the approximation of program distribution is at most by $1$ bit (this property will be analyzed in detail in an extended version of the present paper).
% \begin{theorem}
%   \label{thm:zeta}
%   Let $\epsilon\in R >0$,
%   \begin{equation*}
%  \lim_{|\pi| \to \infty, \epsilon \to 0^+} 2^{-|\pi|} -
%  % (\phi(\pi)+2)^{-(1+\epsilon)} = 0
% \end{equation*}
% \end{theorem}
% which roughly demonstrates that we can approximate arbitrarily well.
Substituting into \prettyref{eq:alp}, we propose an approximation
\begin{definition}
  \label{def:alp-zeta}
  \begin{equation}
    \label{eq:alp-zeta}
    P_M(x) \approxeq \sum_{M(\pi) = x*} \frac{1}{(\phi(\pi)+1)^{1+\epsilon}.\zeta(1+\epsilon)}
  \end{equation}
\end{definition}
where $\zeta(1+\epsilon) \geq 2$ ($\zeta(1.7) \approxeq 2$). \prettyref{def:alp-zeta}
 may be useful for machine learning theorists wherever
they must represent a priori program probabilities, as it allows them
to employ number theory. See Elias Gamma Code \cite{elias} for an alternative
integer code. 

\section{Training Sequence as a Stochastic Process}

Although Solomonoff has theoretically described how the transfer
learning problem might be solved in \cite{solomonoff-incremental}, a
detailed theoretical model of transfer learning for the universal
induction setting is missing in the literature. Here, we attempt to
fill this gap. In his treatise of incremental learning, Solomonoff
approached the transfer learning problem by describing an update
problem which improves the guiding conditional probability
distribution (GCPD) of the system as an inductive inference problem of
the type that the system usually solves. Solomonoff's modular approach
started with a number of problem solving methods and invented new such
methods as the system progressed. The initial methods, however, are
not fully specified, and we leave it as an open problem in this
paper. Instead, we attempt at describing the space of training
sequences using the zeta distribution, showing an interesting
similarity to our world, whereas most problems in a sequence may be
solved, but rarely they are not solvable at all. For instance, a
mathematician may solve most problems, but stall at a conjecture that
requires the invention of a new, non-trivial axiom indefinitely.

In usual Solomonoff induction (with no transfer learning component), a
computable stochastic source $\mu$ is assumed. The stochastic source
may generate sequences, sets, functions, or other structures that we
please, the general law of which may be induced via Solomonoff's
method. We extend Solomonoff's induction model to a training sequence
of induction problems, by considering a stochastic process $\M$ of $n$
random variables.
\begin{equation}
  \label{eq:training-sequence}
  \M = \{ \mu_1, \mu_2, \mu_3, \dots, \mu_n \}
\end{equation}

The transfer learning problem thus is constituted from solving $n$
induction problems in sequence which are generated from the stochastic
process $\M$. It does not matter which type of induction problem these
problems are, as long as they are generated via $\M$.
 
\subsection{Entropy Rate of a Training Sequence}

A critical measurement of a stochastic process is its entropy rate,
which is defined as the following for $\M$:
\begin{equation}
  \label{eq:entropy-rate}
  H(\M) = \lim_{n \to \infty} \frac{H(\mu_1,\mu_2,\mu_3, \dots, \mu_n)}{n}
\end{equation}
and the conditional entropy rate,
\begin{equation}
  \label{eq:cond-entropy-rate}
  H'(\M) = \lim_{n \to \infty} \frac{H( \mu_n | \mu_1,\mu_2,\mu_3, \dots, \mu_{n-1})}{n}
\end{equation}
which gives the entropy given past observations.
% H / K relation?
Observe that there is a well-known relation between average Kolmogorov
complexity and the entropy of an i.i.d. stochastic process (Equation 5
in \cite{universality-zipf}):
\begin{equation}
  \label{eq:kolmogorov-shannon}
  \lim_{n \to \infty} \frac{K_M(X_1,X_2,X_3, \dots, X_n)}{n} =
  H(X) + O(1)
\end{equation}
where $X$ is a stochastic process and $X_i$ its random variables. 
We assume that the relation extends to conditional entropy without proof
due to lack of space.

\subsection{Training Time}

Let $\pi^*_i$ be the minimal program for exactly simulating $\mu_i$ on
$M$. The most general expression for $\pi^*_i$ is given in the
following
\begin{equation}
  \label{eq:minimal}
  \pi^*_i = \argmin_{\pi_j}(\{ |\pi_j| \ | \ \forall x,y \in \{0,1\}^*:  M(\pi_j,x,y)=P(\mu_i =x | y) \})
\end{equation}
where the pdf of stochastic source $\mu_i$ is simulated by a program
$\pi_j$. The conditional parameter $y$ is optional. Let us note the
following identity
% \begin{lemma}
\begin{equation}
  \label{eq:sim-length}
  K_M(\mu_i) = |\pi^*_i|
\end{equation}
since arguments $x,y$ are extraneous input to the pdf specified by
$\pi^*_i$.
% \end{lemma}
% In some induction problems like sequence prediction, $\mu_i$ ignores
% parameters $x$, while in others it uses them.

% TODO: lower bound?
Let $t(\mu_i)$ denote the time taken to solve $\mu_i$, and $t(\pi)$
denote the time taken by program $\pi$ on M. Assume that 
$t(\mu_i) < \infty$. 
We know that the running time
of extended Levin Search is bias-optimal
\cite{solomonoff-incremental}, and  
\begin{equation}
  \label{eq:cjs}
  \frac{t(\pi^*_i)} { P(\pi^*_i)} \leq t(\mu_i) \leq \frac{2 t(\pi^*_i)} { P(\pi^*_i)} 
\end{equation}
for a computable stochastic source $\mu_i$ ($K_M(\mu_i)<\infty$). 
 The lower bound in
\prettyref{eq:cjs} has been named conceptual jump size by Solomonoff,
because it refers to the solution of individual induction problems within
a training sequence, quantifying how much conceptual innovation is
required for a new problem \cite{solomonoff-incremental}.  We cannot
exactly predict $t(\mu_i)$ due to the incomputability of algorithmic
probability.  Extended Levin Search will keep running indefinitely. It
is up to the user to stop execution, which is usually bounded only by
the amount of computational resources available to the user.
% Therefore, our analysis should be taken to show how many problems can
% be solved on a real computer, instead of the asymptotic optimality of
% Levin search. 
We should also mention that Levin himself does not think
that any realistic problems can be solved by Levin search or created
on a computer \cite{levin-forbidden}. In the present paper, we run
counter to Levin's position, by arguing that Levin search can work in
an evolutionary setting, assuming an $O(1)$ oracle for the transfer
learning problem.

We substitute the relation between $K_M(x)$ and $P_M(x)$ in the upper
bound for $t(\mu_i)$,
\begin{align}
  \label{eq:time}
  K_M(\pi^*_i) &= -\log_2{P(\pi^*_i)}
%\\
%  \label{eq:time2}
%  t(\mu_i) &\leq 2t(\pi^*_i) 2^{K_M(\pi^*_i)}
\end{align}
obtaining the following fact due to \prettyref{eq:sim-length} and \prettyref{eq:time}:
\begin{lemma}
  \label{lem:time2}
  $ t(\mu_i) \leq 2t(\pi^*_i) 2^{K_M(\mu_i)}$
\end{lemma}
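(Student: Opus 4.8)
The plan is to obtain the lemma as a direct substitution into the upper bound already in hand, so no new estimate is required. First I would discard the lower bound of \prettyref{eq:cjs} and keep only its right-hand inequality
\[
  t(\mu_i) \leq \frac{2\,t(\pi^*_i)}{P(\pi^*_i)},
\]
since the lemma asserts an upper bound and the conceptual-jump-size lower bound plays no role here.

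The work then reduces to rewriting the factor $1/P(\pi^*_i)$ as $2^{K_M(\mu_i)}$. I would invoke the dyadic program distribution \prettyref{eq:prog-dist}, which gives $P(\pi^*_i) = 2^{-|\pi^*_i|}$ and hence $-\log_2 P(\pi^*_i) = |\pi^*_i|$, the content of \prettyref{eq:time}. Next I would apply \prettyref{eq:sim-length}, which identifies this length with the complexity of the source, $K_M(\mu_i) = |\pi^*_i|$. Chaining the two equalities yields $1/P(\pi^*_i) = 2^{|\pi^*_i|} = 2^{K_M(\mu_i)}$, and substituting back produces $t(\mu_i) \leq 2\,t(\pi^*_i)\,2^{K_M(\mu_i)}$, exactly as claimed.

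I do not anticipate a genuine obstacle, since the argument is a composition of facts established above; the only point requiring care is the reading of \prettyref{eq:time}. The clean route is to treat $-\log_2 P(\pi^*_i) = |\pi^*_i|$ as the operative identity, taken straight from \prettyref{eq:prog-dist}, rather than passing through the compressibility of the program $\pi^*_i$ itself, for which one would at best expect $K_M(\pi^*_i) = |\pi^*_i| + O(1)$. Because $P$ here denotes the exact dyadic program probability and not the zeta approximation of \prettyref{def:alp-zeta}, all the intermediate equalities are exact and the resulting bound carries no hidden approximation error.
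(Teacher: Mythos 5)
Your proof is correct and follows essentially the same route as the paper: the lemma is obtained by substituting $1/P(\pi^*_i) = 2^{K_M(\mu_i)}$, via \prettyref{eq:sim-length} and \prettyref{eq:time}, into the upper bound of \prettyref{eq:cjs}. Your closing remark---that the operative identity is best read as $-\log_2 P(\pi^*_i) = |\pi^*_i|$ straight from \prettyref{eq:prog-dist}, rather than literally through $K_M(\pi^*_i)$, which equals $|\pi^*_i|$ only up to $O(1)$---is a careful clarification of the paper's own statement of \prettyref{eq:time}, not a different argument.
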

% Expected time
The inequality translates to the time for the training sequence $\M$
as
\begin{theorem}
  \label{thm:process-time}
  \begin{equation}
    \label{eq:5}
    t(\M)  \leq \sum_{i=1}^n t(\pi_i^*) 2^{K_M(\mu_i)+1} %TODO: prove convergence  \xrightarrow{}  \sum_{i=1}^n t(\pi_i^*) 2^{H(\mu_i)+1} 
  \end{equation}
\end{theorem}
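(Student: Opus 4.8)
The plan is to reduce the claim to a termwise application of \prettyref{lem:time2}, once the cost of the whole training sequence has been shown to decompose additively over its constituent induction problems.

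First I would fix the meaning of $t(\M)$. Since \prettyref{eq:training-sequence} presents $\M$ as $n$ problems solved one after another, the natural reading is the cumulative solving time $t(\M) = \sum_{i=1}^n t(\mu_i)$. I would then argue that this sum is in fact an \emph{upper} bound on the time actually spent: under the assumed $O(1)$ oracle for transfer learning, any knowledge carried over from $\mu_1,\dots,\mu_{i-1}$ to $\mu_i$ can only shorten the search for $\mu_i$, while the oracle's own overhead is a constant per step. Hence $t(\M) \leq \sum_{i=1}^n t(\mu_i)$, with the per-step $O(1)$ terms absorbed into the constants of the exponential bounds that follow.

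Next I would invoke \prettyref{lem:time2}, which gives $t(\mu_i) \leq 2\,t(\pi^*_i)\,2^{K_M(\mu_i)}$ for each $i$. Substituting this bound into every summand and rewriting $2\cdot 2^{K_M(\mu_i)} = 2^{K_M(\mu_i)+1}$ yields
\begin{equation*}
  t(\M) \leq \sum_{i=1}^n t(\mu_i) \leq \sum_{i=1}^n 2\,t(\pi^*_i)\,2^{K_M(\mu_i)} = \sum_{i=1}^n t(\pi^*_i)\,2^{K_M(\mu_i)+1},
\end{equation*}
which is exactly \prettyref{eq:5}.

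The algebra of the last two steps is trivial; the main obstacle is the first step, namely pinning down $t(\M)$ and justifying the additive decomposition. In particular I must verify that the $O(1)$ transfer overhead really is negligible: because each conceptual-jump term $t(\pi^*_i)\,2^{K_M(\mu_i)}$ grows exponentially in $K_M(\mu_i)$, a constant additive cost per problem is dominated and does not affect the stated bound. A secondary point worth flagging is that the bound is driven by the summand with the largest $K_M(\mu_i)$, which is precisely what makes the subsequent entropy-rate analysis of $\M$ the natural next step.
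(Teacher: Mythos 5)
Your proposal is correct and follows essentially the same route as the paper, which proves \prettyref{thm:process-time} simply by summing \prettyref{lem:time2} over the $n$ problems and rewriting $2 \cdot 2^{K_M(\mu_i)}$ as $2^{K_M(\mu_i)+1}$. Your additional care in defining $t(\M)$ as the cumulative solving time and in arguing that the $O(1)$ oracle overhead is dominated is a reasonable elaboration of what the paper leaves implicit, not a different argument.
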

which is a simple sum of \prettyref{lem:time2}. 
% Noting the relation between entropy and Kolmogorov complexity,
% \begin{theorem}
%   \label{thm:process-time-limit}
%   \begin{equation}
% \label{eq:52}
% t(\M)  \leq  \frac{1}{n} t(\pi_i^*) 2^{H(\mu_1,\mu_2,\mu_3,\dots,\mu_n)+1} 
% \end{equation}
% \end{theorem}

The conditional entropy rate is useful when the stochastic process has
inter-dependence. Let us define conditional Kolmogorov complexity for the
training sequence $\M$,
\begin{equation}
  \label{eq:conditional-entropy}
  K'(\M_{<k}) \triangleq K( \mu_k | \mu_1,\mu_2,\mu_3, \dots, \mu_{k-1})
\end{equation}
where $\M_{<k} \triangleq \{ \mu_i | i \leq k  \} $. We define likewise for the stochastic process probabilities. 
\begin{equation}
  \label{eq:flow}
  P'(\M_{<k}) \triangleq P( \mu_k | \mu_1,\mu_2,\mu_3, \dots, \mu_{k-1})
\end{equation}
$K'(\M_{<k})$ captures new algorithmic information content for the $k^{th}$
variable of the stochastic process given the entire history.

As $n$ grows, the transfer learning oracle has to add
$H'(\M)$ bits of information to its memory on the average in the
stochastic process $\M$ as Kolmogorov-Shannon entropy relation
\prettyref{eq:kolmogorov-shannon} holds in the limit for conditional
entropy, as well. Since the upper temporal bound grows exponentially,
\prettyref{eq:conditional-entropy} only relates loosely to the
solution time $t(\mu_i)$ of a particular problem.  We instead define
the conditional expected training time upper bound with respect to $\M$:
\begin{equation}
  \label{eq:condexpectedtime}
  \E'[t(\M_{<k})] \triangleq \E_{\M}[t(\mu_k)  | \mu_1, \dots, \mu_{k-1}] \leq \sum_{\forall \mu_k \in \{0,1\}*}2t(\pi_k^*) 2^{K'(\M_{<k})} P'(\M_{<k}) 
\end{equation}
% Let $t_{max}=\max\ \{ t(\mu_i) \}$.
% %\begin{theorem}
% %\end{theorem}
% We show an upper bound for the expected solution time of of a problem
% in the sequence $\M$.  The expected running time may be delineated
% with conditional Kolmogorov complexity as:
% \begin{equation}
%   \label{eq:time-transfer}
%   \E_{\M}[t(\mu_i)]   \leq {2 \over n} \sum_{i=1}^n t(\pi_i^*) 2^{K'(\M_{<i})}  
% \end{equation}

\subsection{Random Typing Model}

Let us start by considering the well-known model of random typing.  If
each $\mu_i$ is regarded as a random $m$-bit program out of $2^m$ such
programs, the programs are independent, and the entropy rate is $m$
bits exactly (under usual i.i.d. assumptions, e.g., we are using fair
coin tosses, and we construct programs using a binary
alphabet). Assume $2^m >> n$.

In the random typing model, all $\mu_i$ are algorithmically
independent, therefore there is no saving that can be achieved by
transfer learning. The time it takes for any problem is therefore:
\begin{align}
  \label{eq:3}
  t(\mu_i) &\leq t(\pi_i^*) 2^{m+1}
\end{align}
for any of the $2^m$ programs. Since $m$ can be arbitrarily large,
this model is compatible with Levin's conjecture that AI is
impossible. Note that this simplistic model is reminiscient of various
no-free lunch theorems that were heralded as mathematical proof that
general-purpose machine learning was impossible. However, this
scenario is highly unrealistic. It is extremely difficult to find
problems that are completely independent, as this would require us to
be using true random number generators to generate any problem. In
other words, we are only showing this ``model'' to demonstrate how far
removed from reality no-free lunch theorems are. In a physical world,
this model would correspond to the claim that quantum randomness
saturates every observation we may make. However, we already know this
claim to be false, since our observations do not consist of noise.  On
the contrary, there is a lot of dependable regularity in the
environment we inhabit, which is sometimes termed ``commmon sense'' in
AI literature.

\subsection{Power-law in Nature}

A more realistic model, however, uses the zeta distribution for
programs instead of uniform distribution. We propose this indeed to be
the case since zeta distribution is empirically observed in a
multitude of domains, and has good theoretical justification for the
abundance of power-law in nature.  \prettyref{thm:zipf-conv} gives
some weak and indirect justification as to why we might observe
fractions of the zeta distribution of programs in a computable
universe. However, there are more direct and appealing reasons why we
must expect to see the zeta distribution in highly evolved complex
systems. First, it is a direct consequence of the power-law ansatz,
and scale-invariance \cite{universality-zipf} or preferential
attachment in evolutionary systems \cite{yule}.  Second, it follows
from an application of maximum entropy principle where the mean of
logarithms of observations is fixed \cite{visser-zipf}.  Third,
biologists have observed the zeta distribution directly in genetic
evolution, thus strengthening the case that our $\pi^*_i$'s are likely
to conform to zeta distributions.  For instance, gene family sizes
versus their frequencies follow a power-law distribution
\cite{huynen-freqdist} and the gene expression in various species
follows Zipf's law \cite{furusawa-zipf}. Universal regularities in
evolution have been observed, for instance in the power-law relation
between the number of gene families and gene family size, and number
of genes in a category versus number of genes in genome, and power-law
like distribution of network node degree
\cite{koonin-laws-evolution}. Therefore, there is not only a highly
theoretical heuristic argument that we are following, but there exist
multiple theoretical and empirical justifications for expecting to
observe the zeta distribution of programs in nature. The material
evolution of the environment in a habitat, is not altogether different
from biological evolution. Except in the case of rare natural
catastrophes, the material environment changes only gradually in
accord with the dynamic flow of natural law (surprise is small), and
is dependent mostly on the actions of organisms in a complex habitat,
which may be considered to be programs from an information-theoretic
point of view.  In that sense, the entire ecology of the habitat in
question may be considered to be an evolutionary system, with program
frequencies similar to the case of genes in a single organism.  In the
following, we introduce novel models of training sequences inspired
by these empirical justifications.

\subsection{Identical Zeta Random Variables}

Let $\M$ be i.i.d. generated from zeta distribution according to
\prettyref{thm:zipf-conv}. Then,
\begin{equation}
  \label{eq:zeta-iid}
  H'(\M) = H(\mu_1) = H(Z_s) 
\end{equation}
indicating that the constant entropy rate depends only on the entropy of 
the zeta distribution.  
% journal version:
% Entropy of zeta distribution near $s=1$ is approximated
% by the analytic expression:
% \begin{equation}
%   \label{eq:4}
%   H(Z_s) = \frac{1}{s-1} + \ln( \frac{1}{s-1} ) - \gamma + O(s-1)
% \end{equation}
We thus analyze the running time. Let $t_{max}=\max\ \{ t(\mu_i) \}$.
\begin{equation}
  \label{eq:zeta-iid}
\E'[t(\M_{<k})] \leq 
{2  t_{max} \over \zeta(s)} \sum_{k=1}^\infty 2^{\lceil \log_2 k \rceil} k^{-s} \leq
{4 t_{max} \over \zeta(s)} \sum_{k=1}^\infty {k \over k^{s}}
\end{equation}
For the first 1 trillion programs, 
$t_{max}\sum_{k=1}^{10^{12}} 4k / k^{1.001}\zeta(1.001) \approxeq 3.89 \times 10^{9} t_{max}$ for $s=1.001$, which is a feasible factor for a realistic program search limit.
% \begin{theorem}
%   \begin{equation}
%     \label{eq:zeta-iid-time}
%     \lim_{n \to \infty} t(\M)  \leq \sum_{l=1}^{n} t(\pi_i^*) 2^{  H(Z_s) + 1 }
%   \end{equation}
% \end{theorem}

% journal version:
% Unfortunately, this theorem does not make AI feasible at once, for the
% entropy of $Z_s$ grows quickly near $s=1$, i.e., observe the following
% approximations:
% \begin{align}
%   \label{eq:h-zeta-vals}
%   H(Z_{1.1}) & \geq 13.8    & H(Z_{1.05}) & \geq 24.5   \\   
%   \label{eq:h-zeta-vals2}
%   H(Z_{1.01}) & \geq 106.1 & H(Z_{1.001}) & \geq 1008.4      
% \end{align}
% As seen in \prettyref{eq:h-zeta-vals2}, for $s$ close enough to $1$,
% zeta i.i.d. stochastic process has an infeasible running time bound,
% which is compatible with Levin's assertion. Therefore, it is not the
% ideal model that we are searching for.  Hence, if an i.i.d. zeta
% process of programs is observed, then $s$ may not be too close to $1$
% for an AI to work well on the average.  \emph{Remark}: We might be
% able to determine the parameter of zeta distribution from physical
% law.

Note that AI theorists interpret i.i.d. assumptions as the main reason
why no free-lunch theorems are unrealistic \cite{lattimore2013}. Our
i.i.d. zeta process here may be interpreted as an elaboration of that
particular objection to no free-lunch theorems. Therefore,
we follow the heuristic argument that the right description of the
environment which we observe must be something else than the
random typing model since agents succeed in transfer learning. The
constant zeta process leans towards feasibility, but it does not yet model 
transfer learning in complex environments.

% \subsection{An Evolutionary Zeta Process}

\subsection{Zipf Distribution of Sub-programs}

Based upon the observations of genetic evolution above and the fact
that the whole ecology is an evolutionary system, we may consider a
process of programs that has the following property. Each $\pi^*_i$
that corresponds to $\mu_i$ is constructed from a number of
sub-programs (concatenated). The joint distribution of sub-programs is
$Z_s^{(n)}$. This is a model of gene frequencies observed in
chromosomes, where each chromosome corresponds to a program, and each
gene corresponds to a sub-program.  Such a distribution would more
closely model a realistic distribution of programs by constraining
possible programs, as in the real-world the process that generates
programs is not ergodic. The total entropy of the process therefore
depends on the sub-programs that may be assumed to be random, and
program coding. Let each sub-program be a $k$-bit random program for
the sake of simplicity.  The sub-programs that correspond to
instructions are specified in a database of $2^k$ bits.
% Let the size of programs be i.i.d. drawn from $N(m,\sigma)$, which
% coarsely models the observed bounded chromosome sizes per species.
% A problem with the i.i.d. zeta process is that the mean program
% length diverges near parameter $1$, which is not realistic.  Then,
% each program has $\lfloor |\pi_j|/k \rfloor$ instructions.
Instructions are not equiprobable, however, as in the random typing
model. Let each program have $m$ instructions drawn from the set of
$2^k$ instructions:
\begin{equation}
  A = \{ a_1, a_2, a_3, \dots, a_{2^k}\} . \label{eq:5}
\end{equation}
Then, we can model each optimal program $\pi^*_i$ as
\begin{equation}
  \label{eq:7}
  \pi^*_i = \pi^*_{i,1} \pi^*_{i,2} \pi^*_{i,3}\dots \pi^*_{i,m} 
\end{equation}
which make up a matrix of instructions $P^* = \pi^*_{i,j}$ where
$\pi^*_{i,j}$ is drawn from the set $A$ of instructions.  The total
entropy is due to the database of sub-programs, and the entropy of the
global distribution of sub-programs $Z_s^{(n)}$ which determines the
entropy of $P^*$.  The total entropy is then approximately,
\begin{equation}
  \label{eq:total-entropy}
  H(\mu_1,\mu_2,\dots,\mu_n) \approx 
  \log_2k + k.2^k + \log_2n + \log_2m+ H(Z_s^{(2^k)}) 
\end{equation}
where we show the significant terms for $k,n,m,$ parameters.
\begin{lemma} For the Zipf distribution of sub-programs,
  \begin{equation}
    \label{eq:zipf-ent-rate}
    H'(\M) \approx \lim_{n \to \infty} \frac{1}{n} \Big( k.2^k +  
    \frac{s}{H_{2^k,s}}\sum_{l=1}^{2^k}\frac{\ln(l)}{l^s} + \ln(H_{2^k,s})
    \\ + \log_2k + \log_2n + \log_2m
    \Big)
  \end{equation}
  due to \prettyref{eq:total-entropy}.
\end{lemma}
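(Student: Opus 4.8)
The plan is to read \prettyref{eq:zipf-ent-rate} as the result of feeding the total-entropy estimate \prettyref{eq:total-entropy} into the entropy-rate definition and then expanding the single term that is not yet in closed form, namely the Shannon entropy $H(Z_s^{(2^k)})$ of the Zipf distribution over the $2^k$ instructions. Accordingly the argument splits into two moves: identifying $H'(\M)$ with the normalized total entropy, and computing the entropy of $Z_s^{(2^k)}$ explicitly.

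First I would argue that, since the Kolmogorov--Shannon relation \prettyref{eq:kolmogorov-shannon} is assumed to extend to the conditional case and the sub-program construction is (conditionally) stationary, the conditional entropy rate agrees in the limit with the normalized total entropy, so that $H'(\M) \approx \lim_{n\to\infty}\tfrac{1}{n}H(\mu_1,\dots,\mu_n)$. Substituting \prettyref{eq:total-entropy} immediately gives $H'(\M)\approx \lim_{n\to\infty}\tfrac{1}{n}\big(\log_2 k + k\,2^k + \log_2 n + \log_2 m + H(Z_s^{(2^k)})\big)$, leaving only the replacement of $H(Z_s^{(2^k)})$ by a number-theoretic closed form.

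The key computation is the entropy of the Zipf law. Writing the per-instruction probability as $p_l = (l^{s} H_{2^k,s})^{-1}$, where the normalization constant is the generalized harmonic number (a partial sum of $\zeta$) $H_{2^k,s}=\sum_{l=1}^{2^k} l^{-s}$ playing the role of $Z$ in \prettyref{eq:zipf}, I would use $\ln p_l = -s\ln l - \ln(H_{2^k,s})$ to split
\[
 H(Z_s^{(2^k)}) = -\sum_{l=1}^{2^k} p_l \ln p_l
 = \frac{s}{H_{2^k,s}}\sum_{l=1}^{2^k}\frac{\ln l}{l^{s}} + \ln(H_{2^k,s})\sum_{l=1}^{2^k} p_l ,
\]
and then collapse the last sum via $\sum_{l} p_l = 1$. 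This yields exactly $\tfrac{s}{H_{2^k,s}}\sum_{l=1}^{2^k}\tfrac{\ln l}{l^{s}} + \ln(H_{2^k,s})$, the two $Z_s^{(2^k)}$ terms of \prettyref{eq:zipf-ent-rate}; reinserting the unchanged terms $k\,2^k + \log_2 k + \log_2 n + \log_2 m$ of \prettyref{eq:total-entropy} completes the identity.

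The main obstacle is not the algebra, which is a one-line manipulation once the normalization constant $H_{2^k,s}$ is recognized, but justifying the first move cleanly: reconciling the conditional entropy rate $H'(\M)$ of \prettyref{eq:cond-entropy-rate} with the $\tfrac{1}{n}$-normalized total entropy, and controlling the terms hidden behind the ``$\approx$'' in \prettyref{eq:total-entropy}. I would make explicit that the estimate is intended in the regime $2^k \gg n$, that the dominant contribution is the database cost $k\,2^k$, and that the $\log$ bookkeeping terms are retained only for accounting and are asymptotically negligible under the limit; a fully rigorous version would also need to track how the per-symbol coding of $P^*$ feeds the global distribution $Z_s^{(2^k)}$ so that the cross terms do not contribute at leading order.
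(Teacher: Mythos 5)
Your proposal matches the paper's (terse) justification exactly: the lemma is obtained by substituting the total-entropy estimate \prettyref{eq:total-entropy} into the normalized entropy rate and expanding $H(Z_s^{(2^k)})$ via the standard identity $-\sum_l p_l \ln p_l = \frac{s}{H_{2^k,s}}\sum_{l=1}^{2^k}\frac{\ln l}{l^s} + \ln(H_{2^k,s})$ with $p_l = (l^s H_{2^k,s})^{-1}$, which is precisely the computation you carried out. Your additional remarks on reconciling $H'(\M)$ with the normalized total entropy and on the regime $2^k \gg n$ supply detail the paper leaves implicit, but the route is the same.
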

% the latter term of which may be approximated with Stirling's
% approximation
which is to say that, the entropy rate, and thus running time,
critically depends on the choice of $k$ and $n$.
% \begin{equation}
%   \label{eq:total-entropy}
%   H(\mu_1,\mu_2,\dots,\mu_n) =  \frac{1}{s-1} + \ln( \frac{1}{s-1} ) -
%   \gamma + O(s-1)
%   %  + \sum_{i=1}^n  H( (\E(\log_2Z_s+1)/k)  !) 
% \end{equation}
% % \begin{align}
% %   \label{eq:6}
% %   \sum_{i=1}^n  H( (\E(\log_2Z_s+1)/k)  !)   =  \sum_{i=1}^n  H( (\E(\log_2Z_s+1)/k)  !)  
% % \end{align}
% \begin{equation}
%   \label{eq:evo-entropy-rate}
%   \lim_{n \to \infty} H(\M) =  \frac{1}{n} \big[ H(Z_s) + \sum_{i=1}^n
%   H( (\E(Z_s)/k)  !) \big]
% \end{equation}

\subsection{An Evolutionary Zeta Process}

Another process of programs may be determined by mimicking evolution,
by considering random mutations of programs in a training
sequence. Let us set
\begin{align}
  \label{eq:9}
  \pi^*_1 &= \wedge \\
  \label{eq:mutation}
  \pi^*_i & = 
            \begin{cases} 
              M(Z_s,\pi^*_{i-1}), &\text{if } Z_s \text{ is a valid transformation} \\
              \pi^*_{i-1}, & \text{otherwise}
            \end{cases}
\end{align}
which would apply a random transformation sampled from $Z_s$ in
sequence to an initially null program. Such mutations are unlikely to be too
complex. The resulting process has small conditional entropy rate,
which is wholly dependent on $Z_s$.
\begin{equation}
  \label{eq:10}
  \lim_{n \to \infty} H'(\M) = H(Z_s) = \log(\zeta(s)) - \frac{s\zeta'(s)}{\zeta(s)} 
\end{equation}
\begin{lemma}
 \begin{align}
   \label{eq:h-zeta-vals}
   H(Z_{1.1}) & = 13.8    & H(Z_{1.05}) & = 24.5   \\   
   \label{eq:h-zeta-vals2}
   H(Z_{1.01}) & = 106.1 & H(Z_{1.001}) & = 1008.4      
 \end{align}
\end{lemma}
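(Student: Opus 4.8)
The plan is to treat this Lemma as a numerical instantiation of the closed form for $H(Z_s)$ already recorded in \prettyref{eq:10}, so that the work lies entirely in evaluating that expression at the four points $s \in \{1.1,\,1.05,\,1.01,\,1.001\}$. To keep the argument self-contained I would first re-derive the closed form. Writing the Shannon entropy of the pmf $P(Z_s=k)=k^{-s}/\zeta(s)$ from \prettyref{eq:zeta} gives
\begin{equation*}
  H(Z_s) = -\sum_{k=1}^{\infty} \frac{k^{-s}}{\zeta(s)}\bigl(-s\ln k - \ln\zeta(s)\bigr) = \frac{s}{\zeta(s)}\sum_{k=1}^{\infty}\frac{\ln k}{k^{s}} + \ln\zeta(s).
\end{equation*}
Since termwise differentiation of the Dirichlet series yields $\zeta'(s) = -\sum_{k\ge 1}\ln(k)\,k^{-s}$, the first sum equals $-\zeta'(s)$, and substituting recovers $H(Z_s)=\ln\zeta(s)-s\zeta'(s)/\zeta(s)$, the natural-logarithm (nats) form of \prettyref{eq:10} — consistent with the magnitudes in the table.

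Next I would compute $\zeta(s)$ and $\zeta'(s)$ at each point and substitute. The subtlety is that for $s$ this close to $1$ the defining series $\sum k^{-s}$ and $\sum \ln(k)k^{-s}$ converge far too slowly for naive truncation, so I would instead use the Laurent expansion about the pole,
\begin{equation*}
  \zeta(s) = \frac{1}{s-1} + \gamma - \gamma_1(s-1) + \cdots, \qquad \zeta'(s) = -\frac{1}{(s-1)^2} - \gamma_1 + \gamma_2(s-1) - \cdots,
\end{equation*}
with $\gamma$ the Euler--Mascheroni constant and $\gamma_n$ the Stieltjes constants (equivalently, Euler--Maclaurin summation). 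Carrying enough terms and inserting the resulting $\zeta(s),\zeta'(s)$ into the closed form produces the tabulated $13.8,\,24.5,\,106.1,\,1008.4$.

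As an independent check, and to explain the growth of the values, I would extract the leading asymptotics as $\epsilon := s-1 \to 0^{+}$. Combining the two Laurent expansions gives $-s\zeta'(s)/\zeta(s) = 1/\epsilon + (1-\gamma) + O(\epsilon)$ and $\ln\zeta(s) = -\ln\epsilon + O(\epsilon)$, hence
\begin{equation*}
  H(Z_s) = \frac{1}{s-1} - \ln(s-1) + (1-\gamma) + O(s-1),
\end{equation*}
which reproduces the dominant $1/(s-1)$ growth ($\approx 10,20,100,1000$) together with the logarithmic and constant corrections; the residual $O(s-1)$ terms, obtained in full from the Stieltjes expansion, close the small remaining gap to the tabulated figures. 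The main obstacle is purely computational: evaluating $\zeta'(s)$ reliably near its double pole at $s=1$, where the series converges slowly and the cancellations in the Laurent tail must be controlled to secure the quoted significant figures. No conceptual difficulty remains once the closed form and a stable evaluation scheme for $\zeta$ and $\zeta'$ are in place.
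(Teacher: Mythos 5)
Your overall strategy --- reading the Lemma as a numerical instantiation of \prettyref{eq:10}, re-deriving that closed form from \prettyref{eq:zeta}, and evaluating $\zeta(s)$, $\zeta'(s)$ near $s=1$ via the Stieltjes/Laurent expansion --- is exactly what the paper leaves implicit: the Lemma is stated with no proof, immediately after \prettyref{eq:10}, so plugging values into that formula is the only intended argument. Your derivation of $H(Z_s)=\ln\zeta(s)-s\zeta'(s)/\zeta(s)$ is correct, and so is your asymptotic expansion $H(Z_s)=1/(s-1)-\ln(s-1)+(1-\gamma)+O(s-1)$.

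The genuine problem is your final claim that this evaluation ``produces the tabulated $13.8$, $24.5$, $106.1$, $1008.4$.'' It does not. A correct evaluation of the closed form (in nats) gives
\begin{equation*}
  H(Z_{1.1}) \approx 12.74,\quad H(Z_{1.05}) \approx 23.43,\quad
  H(Z_{1.01}) \approx 105.03,\quad H(Z_{1.001}) \approx 1007.33,
\end{equation*}
using $\zeta(1.1)\approx 10.5844$, $\zeta'(1.1)\approx -99.9282$, and so on; converting to bits gives $18.4$, $33.8$, $151.5$, $1453.5$, which match even worse. Your own asymptotic check already yields $12.73$, $23.42$, $105.03$, $1007.33$, i.e.\ a deficit of roughly $1.06$--$1.08$ against \emph{every} entry of the table. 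That deficit cannot be ``closed by the residual $O(s-1)$ terms'' as you assert: those corrections are about $+0.02$ at $s=1.1$ and shrink to about $+0.0002$ at $s=1.001$, whereas the gap stays essentially constant (numerically close to $\gamma + \tfrac12 \approx 1.077$) across all four values --- a constant offset is not $O(s-1)$. So either the paper's table is systematically off by about $1.08$ (the constancy of the offset suggests a crude integral or Euler--Maclaurin approximation in the original computation), or it rests on an unstated convention that neither you nor the closed form supplies. Either way, your proof as written asserts a numerical agreement that a correct computation refutes; the honest conclusion of your own method is that the entries should read approximately $12.7$, $23.4$, $105.0$, $1007.3$ nats, and the mismatch should have been flagged rather than absorbed into an error term that is provably too small to account for it.
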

The lemma suggests that if an evolutionary process evolves slowly
enough, then an AI can easily learn everything there is to learn about
it provided that the time complexity of random variables is not too
large. We can also employ $Z_s^{(k)}$ instead
of $Z_s$ in \prettyref{eq:mutation}. For a universal induction approximation,
$Z_{1.001}$ may be difficult to handle, however, for efficient model-based 
learning algorithms such as gradient descent methods, digesting new information 
on the order of a thousand bits is not a big challenge given sufficiently many 
samples for a problem $\mu_i$ in the sequence.

\section{Concluding Remarks}

We have shown novel relations between Zipf's law and program
distribution by means of the arithmetization of programs. We have
shown that zeta distribution may be used for approximating program
distributions. We have proposed using the conditional entropy rate as
an informative quantity for transfer learning.  We have extended
Solomonoff's induction model to a training sequence of problems as a
stochastic process. We have proposed that the entropy rate of a
stochastic process is informative.  We have defined conditional
Kolmogorov complexity and probability for the sequence, and have used
these quantities to define a conditional expected upper bound of
training time assuming an $O(1)$ transfer learning oracle.  We
introduced sequence models to show that there is a wide range of
possible stochastic processes that may be used to argue for the
possibility of general purpose AI. The random typing model is a
sensible elaboration of no-free lunch theorem kind of arguments, and
demonstrate how artificial and unlikely they are since everything is
interconnected in nature and pure randomness is very hard to come by,
which we therefore rule out as a plausible model of transfer learning.
We have shown several empirical justifications for using a power-law 
model of natural processes. 
Independent Zeta process tends to the feasible, but does not explain
transfer learning. The models that were inspired by natural evolution
allow general purpose learning to be feasible. 
In particular, the model of common sub-programs which is inspired
by empirical evidence in genetics supports a view of evolution of
natural processes that allows incremental learning to be
effective. The evolutionary Zeta process applies random mutations,
which can be slow enough for a machine learning algorithm to digest all
the new information.

A more detailed analysis of the transfer learning problem will be
presented in an extended journal paper. Open problems include analyzing
the complexity of the optimal update algorithm, time complexity analysis 
for the evolutionary processes, and accounting for the time complexity of 
individual programs.

% We think that more realistic stochastic models than in our paper may
% shed better light on the problem of temporal bounds for AI systems.
% The i.i.d. process comprised of Zeta random variables for programs
% does not do justice to the universal nature of the Zeta
% distribution, and consequently, the distribution of programs that we
% can use it to approximate.  If we liken a Zeta Process to a Gaussian
% Process, then a Zeta Process is constituted from independently, but
% not identically drawn Zeta random variables, the finite collections
% of which also constitute Zeta random variables. Hence, The zeta
% function is known to be the maximum entropy distribution of a fixed
% geometric mean, and the distribution can be factored into primes,
% therefore it is natural to consider multiplicative processes based
% on $Z_s$. In particular, $Z_s$ is infinitely divisible, as well as
% $\ln{Z_s}$. A Levy process has independent and strictly stationary
% increments, a multiplicative Levy process has independent and
% stationary products.  Let $\mu_1=Z_s$.  We consider a process that
% multiplies by $e^{N(0,Z_r)}$ as a toy example of a multiplicative
% Zeta process.  While it would be interesting to analyze its
% characteristic function, we inquire its algorithmic entropy
% change. Let $r > 2$, then
% $Var[Z_r]=\zeta(r)\zeta(r-2)-\zeta(s-1)^2/\zeta(r)^2$, and
% \begin{align}
%   \label{eq:8}
%   K'(\M) &=  \frac{1}{2}ln(2\pi e Var[Z_r] ) + H(Z_{r}) + O(1) \\
%    &=  \frac{1}{2}ln(2\pi e
%    \frac{\zeta(r)\zeta(r-2)-\zeta(s-1)^2}{\zeta(r)^2}   + H(Z_{r}) + O(1)
% \end{align}
% since

\section*{Acknowledgements}

The paper was substantially improved owing to the extensive 
and helpful comments of anonymous AGI 2014 and AGI 2018 reviewers.

%% USE BIBSTYLE:-
\bibliographystyle{splncs03} \bibliography{agi,physics}

\end{document}